\newtheorem{theorem}{Theorem}
\newtheorem{lemma} {Lemma}
\newtheorem{property} {Property}
\theoremstyle{definition}
\newtheorem{definition}{Definition}
\renewcommand{\algorithmicrequire}{\textbf{Input: }}
\renewcommand{\algorithmicensure}{\textbf{Output: }}
\begin{document}

\title{Contact Mode Guided Sampling-Based Planning \\for Quasistatic Dexterous Manipulation in 2D
\thanks{This work was supported by the NSF Grant IIS-1909021.}
}
\author{Xianyi Cheng, Eric Huang, Yifan Hou, and Matthew T. Mason \\ Carnegie Mellon University \\
\{xianyic,erich1,yifanh\}@andrew.cmu.edu, mattmason@cmu.edu }

\maketitle

\begin{abstract}
The discontinuities and multi-modality introduced by contacts make manipulation planning challenging. Many previous works avoid this problem by pre-designing a set of high-level motion primitives like grasping and pushing. However, such motion primitives are often not adequate to describe dexterous manipulation motions. In this work, we propose a method for dexterous manipulation planning at a more primitive level. The key idea is to use contact modes to guide the search in a sampling-based planning framework. Our method can automatically generate contact transitions and motion trajectories under the quasistatic assumption. In the experiments, this method sometimes generates motions that are often pre-designed as motion primitives, as well as dexterous motions that are more task-specific%
\footnote{The code is available at \url{https://github.com/XianyiCheng/Dexterous-Manipulation-Planning-2D}. The supplementary video is available at \url{https://youtu.be/2yYYLN3JAbs}. }.
\end{abstract}

\section{Introduction}
\label{sec:intro}

It is hard to estimate the set of manipulation skills used by humans or animals, and even harder to reproduce all of those skills. 
Byrne \cite{byrne2001manual} documented 72 functionally distinct manipulation primitives used by foraging mountain gorillas. 
Nakamura \cite{nakamura2017complexities} studied human grasping behaviors in grocery stores and noticed that existing taxonomies cannot categorize all the observed behaviors. Within robotics, many researchers have explored individual skills at depth, including pushing \cite{mason1986mechanics}\cite{lynch1996stable}, pivoting \cite{aiyama1993pivot,holladay2015general,hou2018fast}, tumbling \cite{maeda2004motion}, whole-body manipulation \cite{salisbury1988whole}, extrinsic dexterity \cite{ChavanDafle2014extrinsic}, grasping \cite{eppner2015exploitation}\cite{eppner2015planning}, shared grasping \cite{hou2020manipulation}, etc. Clearly, designing a high-quality robotic manipulation skill for a specific task takes significant human labor. 
When planning for robotic manipulation tasks, it is therefore desirable to find an approach that reduces the programming effort required and creates skills that generalize to many scenarios.

Manipulation tasks can be broken down into discrete and continuous parts. The discrete part is about the changes of contacts and their modes. 
For challenging manipulation problems, the planner will need to deal with the combinatorial explosion associated with making a sequence of discrete choices. Even if the sequence is known, the next challenge is to find a piecewise continuous trajectory which satisfies dynamic equations, force limits, and endpoint constraints for each piece of the sequence. 

In this paper, we focus on quasistatic dexterous manipulation planning for robot/finger contact point motions in 2D (an example is shown in Figure \ref{fig:intro}), without pre-defined high-level motion primitives. 
To this end, we propose a sampling-based planner guided by contact modes. The guidance of contact modes is similar to the automatic generation of ``motion primitives''. This planner can generate solutions that cannot be found by a regular sampling-based planner, and are hard to describe as sequences of manually designed macro motion primitives.
The core idea is to find all possible discrete modes through contact mode enumeration; utilizing contact mode constraints, we project randomly sampled configurations onto manifolds defined by contact modes and forward integrate towards the goal configuration on the lower-dimensional manifolds to allow efficient exploration over these continuous spaces. The contact mode guidance enables the discovery of solutions in the zero-volume manifolds in the C-space, which have zero possibility of being randomly sampled. The sampling-based nature of this method enables fast exploration over large continuous and discrete spaces to plan contact-rich motions. A random tree version of this planner is proved to be probabilistic complete in Section~\ref{sec:pc}. 
As a result, in Section~\ref{sec:experiments}, 
this planner shows the advantages of simplicity and speed --- it requires little hand-tuning for different tasks in 2D and can generate complicated trajectories within seconds. 
This framework can also integrate a stability margin method \cite{hou2020manipulation} to pick robust motions for robot execution in Section \ref{sec:robo_experiments}.
\begin{figure}
    \centering
    \includegraphics[width=\columnwidth]{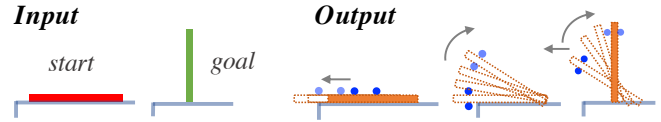}
    \caption{Pickup a blade: one example of the 2D manipulation planning problems to solve in this paper. }
    \vspace{-0.5cm}
    \label{fig:intro}
\end{figure}

\section{Related Work}
\label{sec:literature}

\subsection{Dexterous Manipulation Planning}

To reduce the complexity of planning through contacts, many works use predefined high-level motion primitives \cite{toussaint2018differentiable,lozano2014sequential,barry2013hierarchical,woodruff2017planning,berenson2011constrained}. Often it becomes a Task and Motion Planning problem \cite{lozano2014sequential}. Efficient search and optimization algorithms have been developed to solve these motion sequencing/planning problems\cite{lozano2014sequential}\cite{toussaint2018differentiable}. Hierarchical planning\cite{lee2015hierarchical}\cite{barry2013hierarchical} is designed to reduce the search space by dividing the whole planning problem into sub-problems. 
Sampling-based planning methods like CBiRRT are developed to efficiently explore the manifolds by a variety of constraints \cite{simeon2004manipulation}\cite{berenson2011constrained}. 
Most of the methods above require predefined states or primitives, which require extra engineering efforts given a new task or environment. 
Their solutions are also confined to be the combinatorics of predefined states/primitives.

Contact formations \cite{xiao2001automatic} have been explored in \cite{trinkle1991dexterous}\cite{ji2001planning}\cite{tang2008automatic}\cite{lee2015hierarchical} to plan motions between two rigid bodies. 
This paper shares a similar idea with them: use contacts to decompose the search space into smaller chunks. Search and planning within contact formations are later combined into a complete solution. 

If we consider each point contact as a robot instead of a finger tip, this work is also related to multi-robot/distributed manipulation \cite{bohringer1997distributed}. The applications are mostly focused on planar manipulation. Multi-robot box-pushing\cite{mataric1995cooperative}, furniture moving\cite{rus1995moving}, part reorienting\cite{bohringer1997distributed}, caging with obstacles\cite{fink2008multi}, etc. have been explored in this area. 

Our method can be seen as an extension of constrained sampling-based planning algorithm \cite{berenson2011constrained}\cite{kingston2019exploring}. Our method uses instantaneously enumerated contact modes to find and explore various lower-dimensional manifolds of the C-space, while a regular constrained sampling-based planner only exploits known constrained manifolds provided by the users. One can view our contact mode guidance as the automatic generation of ``motion primitives''.

\subsection{Contact-rich Trajectory Optimization}
Contact-Invariant Optimization methods\cite{mordatch2012contact}\cite{mordatch2012discovery} produce complex whole-body and manipulation behaviors in simulation, assuming soft contacts which may violate physics laws. Dynamic manipulation planning for rigid bodies are explored in \cite{posa2014direct} \cite{Sleiman2019CIO} \cite{doshi2020icra} \cite{aceituno-cabezas2020rss}. They can plan simple manipulation actions with small numbers of contact transitions, such as pushing and pivoting. 
Trajectory optimization could be time-consuming and intractable without good initialization, especially when contact schedules are not specified \cite{aceituno-cabezas2020rss,posa2014direct,manchester2020variational}.
To our best knowledge, there has not been any trajectory optimization method that can solve the tasks of a similar level of complexity presented in this paper. 

\section{Problem Description}
\label{sec:problemformulation}


The inputs to our method are the start and goal configurations of the object, the geometries, and the properties of the object and the environment:
\begin{enumerate}
    \item \textbf{Object start configuration}: 
    $q_{\mathrm{start}} \in SE(2)$.
    
    \item \textbf{Object goal}: object goal configuration $q_{\mathrm{goal}} \in SE(2)$ and the allowable goal region $Q_{\mathrm{goal}} \subset SE(2)$.
    
    \item \textbf{Object properties}: a rigid body $\mathcal{O}$ with known polygonal geometry (can be non-convex), and the friction coefficients between the object with environment $\mu_{\mathrm{env}}$ and the object with the manipulator $\mu_{\mathrm{mnp}}$.
    
    \item \textbf{Environment}: an environment $\mathcal{E}$ with known geometries. As it is 2D, the environment could either be in the horizontal plane, like a tabletop with friction, or in the vertical plane with gravity.
    
    \item \textbf{Manipulator}: we assume a simplified manipulator model which is just $N_{\mathrm{mnp}}$ point contacts. The manipulator's configuration 
    can be represented by the contact locations on the object $[p^{\mathrm{mnp}}_1, p^{\mathrm{mnp}}_2, \dots, p^{\mathrm{mnp}}_{N_{\mathrm{mnp}}}]$.
\end{enumerate}

Our method outputs a trajectory $\pi$ that moves the object from its start to goal. The trajectory is a sequence of object motions, contacts, and contact modes. 
The trajectory $\pi$ includes: 
\begin{enumerate}
    \item \textbf{Object motion}: object configuration $q(t)$ at step $t$. 
    \item \textbf{Environment contacts}: the set of contact points of the object with the environment. The number of environment contacts $N_{\mathrm{env}}(t)$ at each step may vary. The $k$th environment contact is specified by its contact location $p^{\mathrm{env}}_k(t)$, contact normal $n^{\mathrm{env}}_k(t)$ and contact force $\lambda^{\mathrm{env}}_k(t)$. In practice, they are generated by the collision detection of the object and the environment.
    
    \item \textbf{Manipulator contacts}: the set of contact points of the object with the manipulator: $c^{\mathrm{mnp}}(t) = [c^{\mathrm{mnp}}_1(t), c^{\mathrm{mnp}}_2(t), \dots, c^{\mathrm{mnp}}_{N_{\mathrm{mnp}}}(t)]$. 
    The $k$th manipulator contact is specified by its contact location $p^{\mathrm{mnp}}_k(t)$, contact normal $n^{\mathrm{mnp}}_k(t)$ and contact force $\lambda^{\mathrm{mnp}}_k(t)$. 

    \item \textbf{Contact mode}: the contact mode $m(t)$ of the environment contacts and the manipulator contacts: $m(t) \in \{\textit{separate}, \textit{ fixed}, \textit{ right-slide}, \textit{ left-slide}\}^{N_{\mathrm{mnp}} + N_{\mathrm{env}}(t)}$. 
\end{enumerate}


We make the following model assumptions:
(1) Rigid body: the object, environment, and manipulator are rigid.
(2) Quasistatic: the speeds of the object motions are low. Inertial forces and impacts are not considered.
(3) The manipulator contact modes are \textit{fixed} only.
\section{quasistatic multi-contact manipulation model}
\label{sec:mechanics}
In this section, we introduce the contact mechanics and derive the motion generation method based on contact modes that are later used in our planning framework in Section \ref{sec:rrt}.

\subsection{Contact Mode Constraints}

Contact constraints are difficult to deal with due to the presence of multi-modality and discontinuities. The contact constraints can be modeled as complementarity constraints, and they are the main source of problems during contact implicit trajectory optimization (CITO). 
Assuming Coulomb's law, we break down the contact constraints into sets of linear equality and inequality constraints by enumerating all feasible contact modes of the system. Each feasible contact mode maps to a set of local dynamical equations for the system. 
In 2D, the set of kinematically feasible contact modes of $n$ contacts can be defined as $\mathcal{M} \subseteq \{ \textit{fixed}, \textit{right-slide}, \textit{left-slide}, \textit{separate}\}^n$ by the relative motion of the object in the contact frames. All feasible 2D contact modes can be enumerated in $O(n^2)$ \cite{Mason}.
%

By specifying the contact mode $m \in \mathcal{M}$ for a quasistatic system, we obtain linear constraints on object motions and contact forces.

\noindent \textbf{Velocity constraints:} 
Contact velocities can be written as:
\begin{equation}
v_c = G^T v_o - \begin{bmatrix}J \dot{q} \\ 0\end{bmatrix}
\end{equation}
where $v_o$ is the object velocity; $G$ is the grasp map \cite{murray1994mathematical}; $\dot{q}$ is the manipulator joint velocity and $J$ is the manipulator's Jacobians; the $0$ part is for environment contacts that have zero velocities. For point manipulator, $\dot{q}$ can be simply written as the point contact's translational velocity $[\dot{x_1}, \dot{y_1}, \dots, \dot{x_n}, \dot{y_n}]$ and $J$ is an identity matrix.

For the $i$th contact, its contact mode $m_i$ indicates that its relative contact velocity $v_c^i$ is constrained as:
\begin{equation}
    \label{eqn:motion}
    \begin{cases}
    v_{c,n}^i > 0  \quad \text{if} \quad m_i = \textit{separate}\\
    v_{c,n}^i = 0, v_{c,t}^i = 0  \quad \text{if} \quad m_i = \textit{fixed}\\
    v_{c,n}^i = 0, v_{c,t}^i > 0  \quad \text{if} \quad m_i = \textit{right-slide}\\
    v_{c,n}^i = 0, v_{c,t}^i < 0  \quad \text{if} \quad m_i = \textit{left-slide}
    \end{cases}
\end{equation}
where $v_{c,n}^i$ and $v_{c,t}^i$ are $v_c^i$ in contact normal and tangential directions respectively.

\noindent \textbf{Force constraints:}
Under Coulomb's friction law, the force constraints under a mode $m_i$ can be written as:
\begin{equation}
    \label{eqn:force}
    \begin{cases}
    \lambda_n^i = 0, {\lambda_t^i}^{+}=0, {\lambda_t^i}^{-} = 0 \quad \text{if} \quad m_i = \textit{separate} \\
    \mu \lambda_n^i - {\lambda_t^i}^{+}  > 0, \mu \lambda_n^i - {\lambda_t^i}^{-}  > 0 \quad \text{if} \quad m_i = \textit{fixed} \\
    \mu \lambda_n^i = {\lambda_t^i}^{-}, {\lambda_t^i}^{+} = 0 \quad \text{if} \quad m_i = \textit{right-slide} \\
    \mu \lambda_n^i = {\lambda_t^i}^{+}, {\lambda_t^i}^{-} = 0 \quad \text{if} \quad m_i = \textit{left-slide}
    \end{cases}
\end{equation}
where $\lambda_n^i$ is the contact normal force, ${\lambda_t^i}^{+}$ and ${\lambda_t^i}^{-}$ are the contact tangential force magnitudes in the positive direction (right-slide) and negative direction (left-slide) respectively.

Given a contact mode $m = [m_1,m_2,\dots,m_N]$ for all the contacts on the object, let all contact forces be $\lambda = [ \lambda_n^1, {\lambda_t^1}^+, {\lambda_t^1}^-, \dots, \lambda_n^N, {\lambda_t^N}^+, {\lambda_t^N}^- ] $, combining Equation \ref{eqn:motion} and \ref{eqn:force} for all the contacts, the velocity and force constraints can be written as linear equalities and inequalities:
\begin{equation}
    \label{eqn:mode}
    \begin{split}
    A_{\mathrm{ineq}} \begin{bmatrix}v & \lambda \end{bmatrix}^T > b_{\mathrm{ineq}}\\
    A_{\mathrm{eq}} \begin{bmatrix}v & \lambda \end{bmatrix}^T = b_{\mathrm{eq}}
    \end{split}
\end{equation}

Given the desired contact mode and desired object velocity $v_d$, we can find the closest feasible object velocity by solving a quadratic programming problem: 
\begin{equation}
\label{eqn:optvel}
    \min\limits_{v_o, \dot{q}, \lambda,} \| v_d - v_o \| + \epsilon \lambda^T \lambda 
\end{equation}
where $\epsilon \lambda^T \lambda$ is a regularization term on the contact forces.
This optimization problem is subject to the contact mode constraints in Equation \ref{eqn:mode} and the static equilibrium constraints below:
\begin{equation}
   G\lambda + F_{\mathrm{external}} = 0
\end{equation}
where $v_o$ is the object velocity; $G$ and $\lambda$ are the grasp map and contact forces for all contacts; $F_{external}$ include forces and torques that are not modeled in $\lambda$, such as gravity.


\subsection{Projected Forward Integration}
\label{sec:forward}

\begin{figure}
    \small
    \centering
    \begin{subfigure}{\columnwidth}
    \includegraphics[width=\columnwidth]{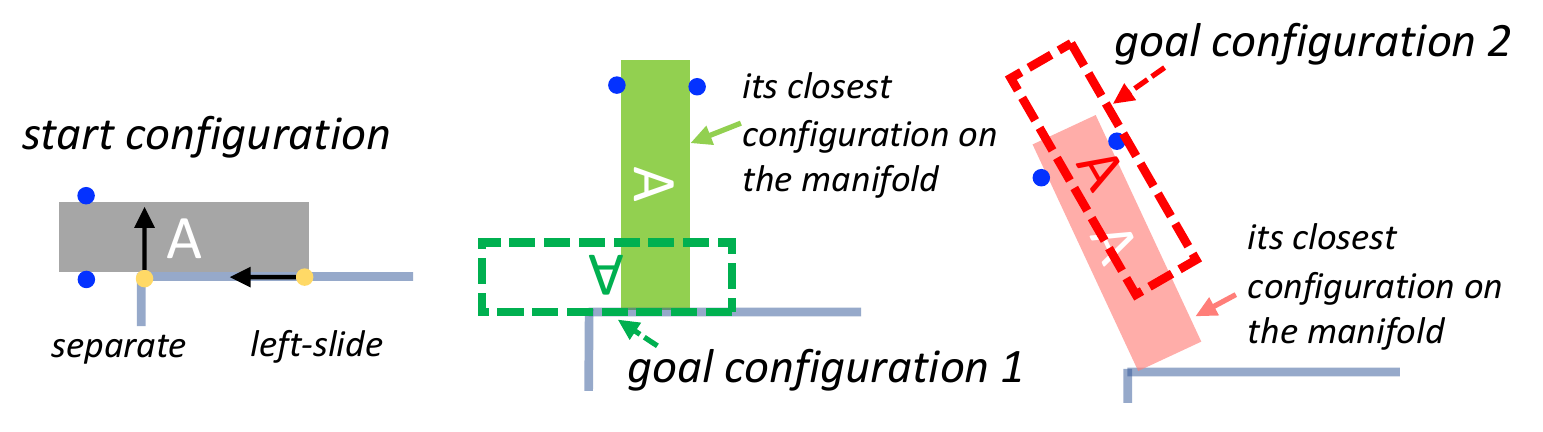}
    \caption{The gray solid block shows the start pose of the object with two blue manipulator contacts (both \textit{fixed}) and two yellow environment contacts (contact mode: {\textit{separate}, \textit{left slide}}). In goal configuration 1, the block is 180 degree flipped (the dotted green block). Its closest configuration on the contact mode manifold is the solid light green block, as current mode is interrupted by new contact with the table. For goal configuration 2 (the dotted red block), its closest configuration is shown as the solid light red block.} \label{fig:1a}
    \end{subfigure}
    \begin{subfigure}{\columnwidth}
    \includegraphics[width=\columnwidth]{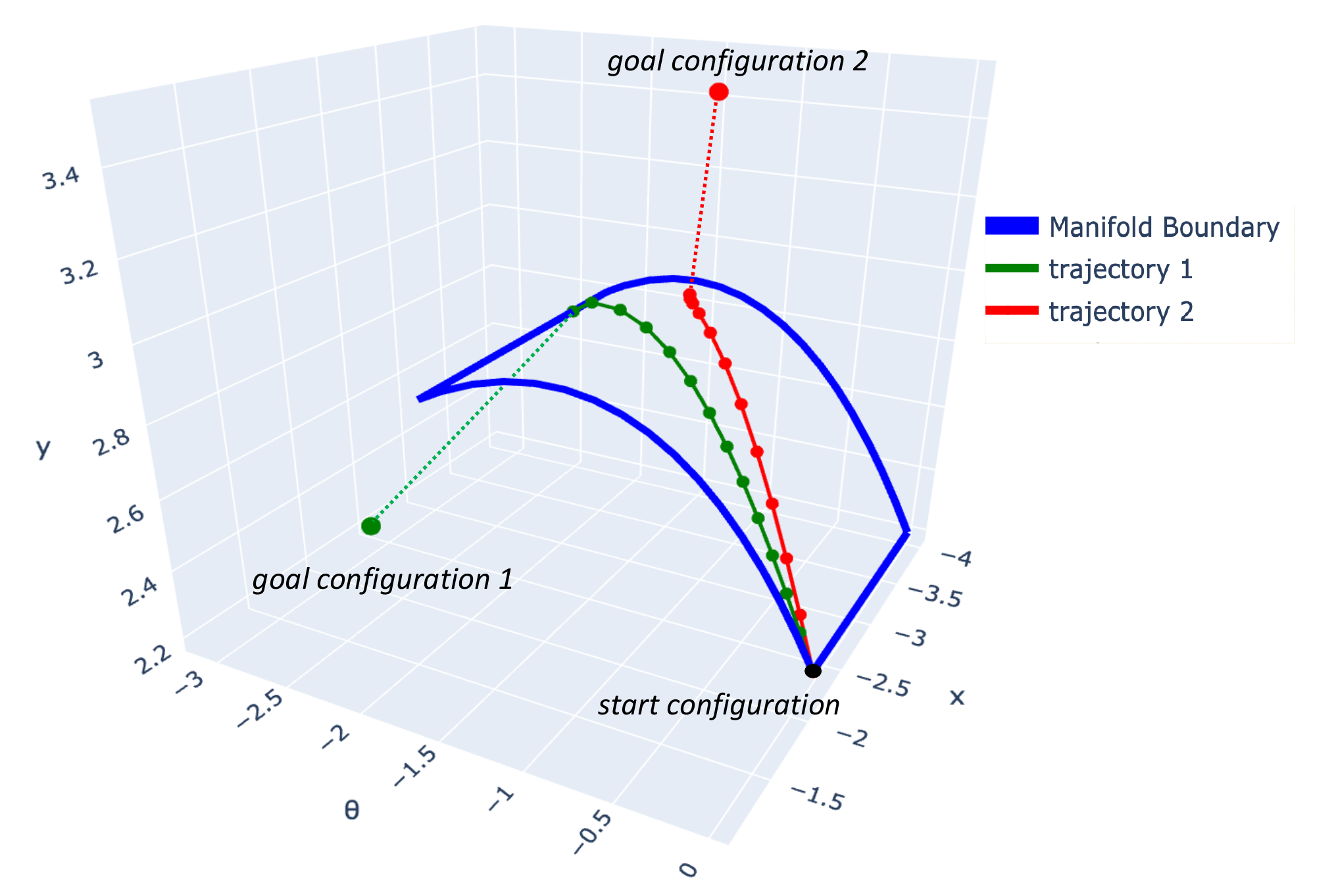}
    \caption{The iterative process of forward integration. At every timestep it moves towards the goal configuration on the manifold. For goal configuration 1, this process finds a point on the boundary. For goal configuration 2, it finds a point in the interior.} \label{fig:2a}
    \end{subfigure}
    \caption{Two examples of the forward integration process for a specified contact mode in Section \ref{sec:forward}. 
    }
    \label{fig:forward}
    \vspace{-0.3cm}
\end{figure}



Starting from an object configuration, all reachable configurations under a contact mode form a manifold with boundary in the object configuration space. However, this manifold does not has explicit representation since the contact mode only provides linear velocity and force constraints for each point on the manifold, thus we cannot directly 
project a configuration onto this manifold. In this section, we describe an iterative method that finds the closest reachable configuration to a goal configuration on a contact mode manifold, as visualized in Figure \ref{fig:forward}. 

Given a desired configuration $q_d$, forward integration moves towards $q_d$ on the contact mode manifold by iteratively integrate $v_o$ computed in Equation \ref{eqn:optvel} through the Euler method:
\begin{equation}
    q_{k+1} = Tr(q_k, h {v_o}^k)
\end{equation}
where $h$ is the length of the time-steps, $Tr$ is the rigid body transformation computed from the body velocity $h {v_o}^k$, applied on $q_k$ \cite{murray1994mathematical}. At each timestep, the desired velocity $v_d$ is computed as the body velocity between $q_k$ and $q_{\mathrm{rand}}$ in the twist coordinate \cite{murray1994mathematical}, and environment contacts are updated by collision checking. 
The forward integration stops when:
(1) ${v_o}^k$ is zero: $q_k$ is the closest configuration on the contact manifold to $q_d$
(2) No feasible ${v_o}^k$: the static equilibrium cannot be maintained anymore
(3) Collision: the fingers collide with the environment, or new object-environment contacts are made. In the simulation, we use linear interpolation to go back to zero contact distances.


The following property states that this forward integration process is a projection operation. 
\begin{property}
\label{lemma:project}
The forward integration process described in Section \ref{sec:forward} is a projection operator $P(\cdot)$ to a manifold $\mathcal{M}$ which has the following properties:
\begin{enumerate}
    \item $P(q) = q$ if and only if $q \in \mathcal{M}$
    \item If $q \in \mathcal{M}$ is the closest point to $q_1 \notin \mathcal{M}$, $P(q_1) = q$.
\end{enumerate}
\end{property}

\section{Contact mode guided sampling based planning}
\label{sec:rrt}

While the exploit of contact modes can be used in different planning frameworks, we adopt a rapidly exploring random tree (RRT) as the high-level planning framework (Algorithm \ref{alg:rrt}). 
During the \textsc{extend} process of RRT, our method project the randomly sampled object configuration onto the contact mode manifolds of its nearest node. This ensures random sampling on zero-volume manifolds in the C-space. As the proofs show in Section \ref{sec:pc}, a random tree version of our method (Algorithm \ref{alg:rrt-complete}) is probabilistic complete. 


\subsection{Planning Framework}
\label{sec:main-alg}
\begin{figure}
\vspace{-1.5mm}
\end{figure}
\begin{algorithm}[t]
    \small
    \caption{Contact Mode Based Manipulation Planner} \label{alg:rrt}
    \algorithmicrequire $q_{\mathrm{start}}$, $q_{\mathrm{goal}}$ \newline
    \algorithmicensure tree $\mathcal{T}$
    \begin{algorithmic}[1]
        \State $\mathcal{T}\text{.add-node}(q_{\mathrm{start}})$ 
        \While {(\textit{Time limit has not been reached})}
        \State $q_{\mathrm{rand}} \gets \Call{sample-object-config}{q_{\mathrm{goal}}}$
         
        \State $q_{\mathrm{near}}\gets \Call{nearest-neighbor}{\mathcal{T},q_{\mathrm{rand}}}$
        \State $c^{\mathrm{mnp}}_{\mathrm{near}} \gets \Call{node-property}{q_{\mathrm{near}}}$
        \State $c^{\mathrm{env}}_{\mathrm{near}} \gets \Call{collision-detection}{q_{\mathrm{near}}}$
        \State $\mathcal{M} \gets \Call{contact-mode-enumeration}{q_{\mathrm{near}}}$
        \For {$m \in \mathcal{M}$} 
        \Comment{Iterate every contact mode}
        \State $\Call{Extend}{m, q_{\mathrm{near}}, q_{\mathrm{rand}}, c^{\mathrm{mnp}}_{\mathrm{near}}, c^{\mathrm{env}}_{\mathrm{near}}}$ 
   
        \EndFor
        \EndWhile
        
        \State \Return $\mathcal{T}$
    
    \Function{Extend}{$m$(mode), $q_{\mathrm{near}}$, $q_{\mathrm{rand}}$, $c^{\mathrm{mnp}}_{\mathrm{near}}$, $c^{\mathrm{env}}_{\mathrm{near}}$}
    \State $v \gets \Call{closest-feasible-vel}{q_{\mathrm{near}}, q_{\mathrm{rand}}, c^{\mathrm{mnp}}_{\mathrm{near}}, c^{\mathrm{env}}_{\mathrm{near}},m}$
    \If{$\|v\| == 0$}
    \State $c^{\mathrm{mnp}}_{\mathrm{new}} \gets \Call{change-manip-contact}{q_{\mathrm{near}}, c^{\mathrm{mnp}}_{\mathrm{near}}, m}$
    \Else 
    \State $c^{\mathrm{mnp}}_{\mathrm{new}} \gets c^{\mathrm{mnp}}_{\mathrm{near}}$
    \EndIf
    \State $q_{\mathrm{new}} \gets \Call{forward-integrate}{q_{\mathrm{near}}, q_{\mathrm{rand}}, c^{\mathrm{mnp}}_{\mathrm{new}}, m}$
     \If {$q_{\mathrm{new}} \neq q_{\mathrm{near}}$} 
    \State $\Call{assign-finger-contact}{q_{\mathrm{new}},c^{\mathrm{mnp}}_{\mathrm{new}}}$ 
    \State $\mathcal{T}\text{.add-node}(q_{\mathrm{new}})$
    \State $\mathcal{T}\text{.add-edge}(q_{\mathrm{near}},q_{\mathrm{new}})$
    \EndIf
    \State \Return 
    \EndFunction
    \end{algorithmic}
\end{algorithm}

Our planner is presented in Algorithm \ref{alg:rrt}. An object configuration $q_{\mathrm{rand}}$ is drawn from the configuration space through \textsc{sample-object-config} function, where $q_{\mathrm{rand}}$ has user defined possibility $p$ of being a random sample and $1-p$ of being $q_{\mathrm{goal}}$. For $q_{\mathrm{rand}}$, its nearest neighbor $q_{\mathrm{near}}$ in the tree $\mathcal{T}$ is found through a weighted $SE(2)$ metric: 
\begin{equation}
\begin{split}
    d(q_1, q_2) = \sqrt{(q_1^{(x)} - q_2^{(x)})^2 + (q_1^{(y)} - q_2^{(y)})^2} \\+ w_r \min(|q_1^{(\theta)}-q_2^{(\theta)})|,2\pi - |q_1^{(\theta)}-q_2^{(\theta)})|)
\end{split}
\end{equation}
where $w_r$ is the weight that indicates the importance of rotation in the tasks.
Then the properties of node $q_{\mathrm{near}}$, previous finger contacts $c^{\mathrm{mnp}}_{\mathrm{near}}$ and environment contacts $c^{\mathrm{env}}_{\mathrm{near}}$, are obtained from $\mathcal{T}$. 
Function $\textsc{contact-mode-enumeration}$ enumerates all possible environment contact modes $\mathcal{M}$ for $q_{\mathrm{near}}$ (manipulator contact modes are always \textit{fixed}). Under each environment contact mode $m \in \mathcal{M}$, the \textsc{extend} function is performed. The \textsc{extend} function projects the $q_{\mathrm{rand}}$ onto the contact mode manifold through the forward integration process and switch the manipulator contacts if necessary. If \textsc{extend} is successful, a new node $q_{\mathrm{new}}$ and a new edge between $q_{\mathrm{new}}$ and $q_{\mathrm{near}}$ will be added to $\mathcal{T}$. 

In the \textsc{extend} function, we first check the feasibility of the motion towards $q_{\mathrm{rand}}$ through function $\textsc{closest-feasible-vel}$, by solving the quadratic programming problem in Equation \ref{eqn:optvel}. 
If there exists a non-zero solution, the motion is feasible and previous manipulator contacts can remain unchanged. If the motion is not feasible, or the finger contacts collide with the environment, \textsc{change-manip-contact} tries to relocate a random number of manipulator contacts to randomly sampled collision-free contact locations. The relocation is successful if static equilibrium can still be maintained without the relocating finger contacts. The outputs are new manipulator contacts $c^{\mathrm{mnp}}_{\mathrm{new}}$. Finally, $q_{\mathrm{new}}$
is computed by function \textsc{forward-integrate} as described in Section \ref{sec:forward}.  


Besides the main algorithm shown above, we can evaluate the robustness of a motion to force disturbance under the specified contact mode by the stability margin score in\cite{hou2020manipulation}. This stability margin can be used to filter out potentially unstable motions in the planning process --- only a node with enough stability margin will be added to the tree. The plans for real robot experiments in Section \ref{sec:robo_experiments} are generated this way. It can also be used to select better manipulator contacts in \textsc{change-manip-contact}.

\subsection{Probabilistic Completeness}
\label{sec:pc}

Let $\mathcal{Q}$ be the object configuration space, in our case $\mathcal{Q} \subseteq SE(2)$. We denote $\mathcal{B}^d_\delta (q)$ as an open $d$-dimensional ball of radius $\delta$ centered at $q \in \mathbf{R}^n$. Let $\mathcal{U}$ be the parameter space of $N$ finger contacts on the object surface.  


\theoremstyle{definition}
\begin{definition}
\label{def:traj}
A quasistatic trajectory $\pi: [0, t_T] \to \mathcal{Q} \times U$ is with clearance $\delta$ so that: at every point $\pi(t_i)$, there exists an open ball $\mathcal{B}^{d_i+N}_\delta(\pi(t_i))$ on the Cartesian product of its $d_i$-dimensional contact mode manifold $\mathcal{Q}_{m_i}$ and the manipulator contact set $U$, where static equilibrium of the object can be maintained.
\end{definition}

We assume that there exists a trajectory $\pi: [0, t_T] \to \mathcal{Q} \times U$ with clearance $\delta$. The trajectory begins at $q_{\mathrm{start}}$ and terminates at $q_{\mathrm{goal}}$. The trajectory also captures discrete changes including adjacent contact mode transitions, contact encounters, and finger contact switches: there is a point on the trajectory at every discrete change. 
We prove that if such trajectory exists, a random tree framework shown in Algorithm \ref{alg:rrt-complete} is probabilistic complete of finding it.






\begin{lemma}
\label{lemma:nonzero}
Suppose that RRT has reached $\pi'(t_i) \in \mathcal{B}^{d_i+N}_\delta(\pi(t_i))$, $\pi'(t_{i+1}) \in \mathcal{B}^{d_{i+1}+N}_\delta(\pi(t_{i+1}))$ under contact mode $m_{i+1}$ has nonzero probability of being sampled.
\end{lemma}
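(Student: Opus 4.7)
The plan is to decompose the one-step extension event into a finite collection of independent algorithmic choices, each with a lower-bounded probability, and then observe that their product is strictly positive. The relevant choices are: (i) the random-tree routine picks $\pi'(t_i)$ as the node to extend from; (ii) the contact-mode enumeration at $\pi'(t_i)$ includes the mode $m_{i+1}$; (iii) the random sample $q_{\mathrm{rand}}$ lands in a region $R\subset SE(2)$ whose forward-integration projection onto the $m_{i+1}$-manifold lies inside $\mathcal{B}^{d_{i+1}+N}_{\delta}(\pi(t_{i+1}))$; and (iv) if a finger switch is needed, \textsc{change-manip-contact} draws valid new contact locations. My job is to exhibit, for each of these, an open set of ``good'' outcomes.

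First I would argue the discrete choices. Since $\pi'(t_i)$ is already in the tree, the random-tree selection picks it with probability at least $1/|\mathcal{T}|>0$. For the mode enumeration, the clearance hypothesis says $\pi(t_i)$ and $\pi(t_{i+1})$ lie in a shared $\delta$-neighborhood on which $m_{i+1}$ is kinematically feasible; because the enumeration of Section~\ref{sec:mechanics} returns \emph{all} feasible modes and is therefore exhaustive, $m_{i+1}$ is among those iterated in the \textsc{for} loop of Algorithm~\ref{alg:rrt}. If the trajectory calls for a finger relocation at $t_{i+1}$ (a discrete change captured on $\pi$ by assumption), the $\delta$-clearance on $U$ provides an open set of collision-free manipulator contacts that maintain static equilibrium, so the uniform resampling inside \textsc{change-manip-contact} hits this set with positive probability.

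For the continuous step I would use Property~\ref{lemma:project}. Forward integration under $m_{i+1}$ is a projection $P_{m_{i+1}}$ onto the local manifold, and on the clearance ball $P_{m_{i+1}}$ is continuous in its argument (the QP in Equation~\ref{eqn:optvel} varies continuously with $q_{\mathrm{rand}}$ in this regime, since no equilibrium failure or spurious contact interrupts it). Because $\pi(t_{i+1})$ lies in the interior of $\mathcal{B}^{d_{i+1}+N}_{\delta}(\pi(t_{i+1}))\cap\mathcal{Q}_{m_{i+1}}$, the pre-image $R=P_{m_{i+1}}^{-1}\bigl(\mathcal{B}^{d_{i+1}+N}_{\delta}(\pi(t_{i+1}))\bigr)$ contains an open neighborhood in the ambient $SE(2)$ and hence has positive Lebesgue measure. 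The \textsc{sample-object-config} draw is uniform (with a positive uniform component even under goal-bias), so $\Pr[q_{\mathrm{rand}}\in R]>0$. Multiplying the four positive probabilities gives the claim.

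The main obstacle I expect is the continuous step: showing that the Euler-based \textsc{forward-integrate} actually runs to completion into the target ball, rather than terminating early due to loss of equilibrium, a new unexpected object--environment contact, or finger--environment collision. Handling this requires leaning firmly on Definition~\ref{def:traj}: within the $\delta$-clearance tube, equilibrium is guaranteed everywhere on the manifold and the finger-contact ball lies in $U$, so none of the three stopping conditions of Section~\ref{sec:forward} can trigger spuriously. A short continuity-and-step-size argument (choosing $h$ small enough that the discrete integrator's trajectory is uniformly close to the continuous flow on the manifold) then ensures the terminal iterate lies in $\mathcal{B}^{d_{i+1}+N}_{\delta}(\pi(t_{i+1}))$, completing the proof.
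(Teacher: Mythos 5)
Your overall strategy matches the paper's: reduce the one-step extension to a product of positive-probability events, with the continuous step handled by the projection structure of \textsc{forward-integrate} (Property \ref{lemma:project}). The main difference is that where you give a direct continuity/pre-image-measure argument (the pre-image of the target ball under the projection is open in $SE(2)$, hence has positive measure under uniform sampling), the paper simply invokes Theorem 5 of \cite{berenson2011constrained}, which states that projection sampling with an operator satisfying Property \ref{lemma:project} covers the manifold. Your more self-contained argument is fine in spirit, and your treatment of the finger-switch case via the $\delta$-clearance in $U$ coincides with the paper's Situation (2). Two remarks. First, the $1/|\mathcal{T}|$ node-selection bound is unnecessary and not quite right: Lemma \ref{lemma:nonzero} concerns Algorithm \ref{alg:rrt-complete}, which deterministically iterates over every node and every enumerated mode, so $\pi'(t_i)$ is extended under $m_{i+1}$ with probability one (for the nearest-neighbor variant of Algorithm \ref{alg:rrt}, the bound $1/|\mathcal{T}|$ would not follow anyway, since selection is governed by Voronoi regions, not uniform choice).

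Second, and more substantively, your argument covers the paper's Situations (1) and (2) but not Situation (3), where the discrete change at $t_{i+1}$ is precisely the making of a \emph{new} object--environment contact ($\mathcal{M}_i \neq \mathcal{M}_{i+1}$). In that case $\pi(t_{i+1})$ lies on the boundary of the mode manifold reachable from $\pi'(t_i)$, so \textsc{forward-integrate} necessarily terminates through its collision stopping condition followed by linear interpolation back to zero contact distance; your continuity-and-step-size argument, which is built on the premise that none of the stopping conditions trigger inside the clearance tube, does not apply there, and the "interior of the ball'' premise of your pre-image argument also fails since the target sits on the manifold boundary. The paper closes this case by observing that the collision-plus-interpolation step is itself a projection operator satisfying Property \ref{lemma:project}, and then applying the same covering theorem to conclude the landing point falls in $\mathcal{B}^{d_{i+1}+N}_\delta(\pi(t_{i+1}))$ with nonzero probability. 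You would need to add an analogous argument (continuity of the interpolated stopping point in the sampled configuration, plus openness of the set of samples whose stopping point lands in the ball) to make your proof cover contact-gaining transitions.
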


\noindent\textit{Proof.}(Sketch)
There are three possible situations for $\pi(t_{i+1})$, we prove them all have nonzero sampling probability:
(1)The contact state in $\pi(t_{i+1})$ doesn't change. The set of contact modes $\mathcal{M}_i = \mathcal{M}_{i+1}$. There is no manipulator contact switch. 
(2)There is a manipulator contact switch $u_i, u_{i+1} \in U$, $u_{i} \neq u_{i+1}$.
(3)Only the contact state is changed $\mathcal{M}_i \neq \mathcal{M}_{i+1}$ (encounter new environment contacts). 

Situation (1): As Algorithm \ref{alg:rrt-complete} iterate every possible contact mode for every existing node, contact mode $m_{i+1}$ will be visited with possibility 1. A random sample $x_{\mathrm{rand}}$ is drawn and project onto the manifold for $m_{i+1}$ through forward integration in \ref{sec:forward}. By Theorem 5 in \cite{berenson2011constrained}, the projection sampling using a project operator with Property \ref{lemma:project} covers the manifold, which means the probability to place a sample into $\mathcal{B}^{d_{i+1}+N}_\delta(\pi(t_{i+1}))$ is nonzero. 

Situation (2): Since static equilibrium will be maintained within clearance $\delta$. The probability of the manipulator contacts to be resampled from 
to $B^N_\delta(u_{i+1})$ is nonzero. 

Situation (3):
New environment contacts are encountered at the boundaries of the contact mode manifold. As described in Section \ref{sec:forward}, linear interpolation is perform to go back to the boundary when penetration happens. The linear interpolation is also a projection operator with Property \ref{lemma:project}. 
By Theorem 5 in \cite{berenson2011constrained}, the probability to place a sample into $\mathcal{B}^{d_{i+1}+N}_\delta(\pi(t_{i+1}))$ is nonzero. 



\begin{theorem}
If there exist a trajectory satisfying Definition \ref{def:traj}, the probability of Algorithm \ref{alg:rrt-complete} to find a trajectory within its clearance is nonzero.
\end{theorem}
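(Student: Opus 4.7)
The plan is to reduce the theorem to a finite induction that repeatedly invokes Lemma~\ref{lemma:nonzero}. Since the reference trajectory $\pi$ contains only finitely many discrete events (contact-mode transitions, environment-contact encounters, and manipulator-contact switches), and each smooth piece of $\pi$ lives on a single contact-mode manifold, I would first discretize $\pi$ into a finite sequence of waypoints $\pi(t_0) = q_{\mathrm{start}}, \pi(t_1), \ldots, \pi(t_T) \in Q_{\mathrm{goal}}$, placing one waypoint at every discrete event and inserting additional waypoints inside each smooth piece densely enough that consecutive waypoints lie within each other's $\delta$-balls on the appropriate manifolds. This preparation ensures that the precondition of Lemma~\ref{lemma:nonzero} is satisfied at every step.

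Next I would argue by induction on $i$. The base case $i=0$ is immediate since $q_{\mathrm{start}} = \pi(t_0)$ is inserted into $\mathcal{T}$ on line~1 of Algorithm~\ref{alg:rrt-complete}. For the inductive step, assuming the tree already contains some node $\pi'(t_i) \in \mathcal{B}^{d_i+N}_\delta(\pi(t_i))$, I would show that within finitely many further iterations of the outer while-loop the tree gains, with positive probability, a node in $\mathcal{B}^{d_{i+1}+N}_\delta(\pi(t_{i+1}))$. The required ingredients are: (i) Lemma~\ref{lemma:nonzero} supplies a per-iteration lower bound $p_i > 0$ conditional on the correct parent $\pi'(t_i)$ being selected as $q_{\mathrm{near}}$ and the correct mode $m_{i+1}$ appearing in the inner for-loop; (ii) because \textsc{contact-mode-enumeration} returns a finite set, mode $m_{i+1}$ is tried deterministically once $q_{\mathrm{near}}$ is fixed; and (iii) the sampler chooses a $q_{\mathrm{rand}}$ near $\pi(t_{i+1})$ with positive probability (through goal biasing or by drawing from a suitable neighborhood), and by the $\delta$-clearance such a sample has $\pi'(t_i)$ as its nearest neighbor with positive probability.

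Multiplying the resulting positive per-step lower bounds over the finite sequence of $T$ waypoints produces a strictly positive overall lower bound on the probability that, after sufficiently many iterations, $\mathcal{T}$ contains a connected chain of edges whose nodes track $\pi$ inside the prescribed $\delta$-clearance. Concatenating those edges yields a trajectory realized by Algorithm~\ref{alg:rrt-complete} that lies within the clearance of $\pi$, which is exactly what the theorem asserts.

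The main obstacle I anticipate is rigorously controlling the nearest-neighbor step of item~(iii): a priori, some other node of $\mathcal{T}$ generated during exploration could be closer to the biased sample than the intended parent $\pi'(t_i)$ under the weighted $SE(2)$ metric of Section~\ref{sec:main-alg}. I would handle this in the standard RRT fashion: the clearance $\delta$ provides enough geometric room around each waypoint that any sample landing in a small, a priori fixed subregion of $\mathcal{B}^{d_{i+1}+N}_\delta(\pi(t_{i+1}))$ forces $\pi'(t_i)$ to be the nearest neighbor, independently of what else has been placed in the tree, and the metric's equivalence with the usual metric on $SE(2)$ lets one convert the clearance into a positive lower bound on this event's probability.
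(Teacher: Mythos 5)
Your overall strategy is the same as the paper's: discretize the reference trajectory into finitely many waypoints (one at each discrete event), apply Lemma~\ref{lemma:nonzero} to get a nonzero probability of advancing from the $\delta$-ball around $\pi(t_i)$ to the one around $\pi(t_{i+1})$, and multiply the finitely many positive per-step bounds. Two points of divergence are worth flagging. First, the obstacle you spend the most effort on --- guaranteeing that $\pi'(t_i)$ is selected as the nearest neighbor of the biased sample --- does not exist for the algorithm the theorem is actually about: Algorithm~\ref{alg:rrt-complete} is the random-tree variant that, in each iteration, loops over \emph{every} node in $\mathcal{T}$ and, for each node, over \emph{every} enumerated contact mode, precisely so that no Voronoi-bias/nearest-neighbor argument is needed (the nearest-neighbor step belongs to Algorithm~\ref{alg:rrt}, whose completeness is not claimed). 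This matters because your proposed patch --- that a sample landing in a small fixed subregion of $\mathcal{B}^{d_{i+1}+N}_\delta(\pi(t_{i+1}))$ forces the intended parent to be nearest ``independently of what else has been placed in the tree'' --- is not true in general: other tree nodes can accumulate arbitrarily close to that subregion, which is exactly why fixed-clearance RRT completeness arguments are delicate. For Algorithm~\ref{alg:rrt-complete} the per-step bound from Lemma~\ref{lemma:nonzero} applies unconditionally to the node $\pi'(t_i)$, so your item~(iii) and final paragraph can simply be dropped. Second, your base case is slightly incomplete: the trajectory of Definition~\ref{def:traj} lives in $\mathcal{Q}\times U$, so besides $q_{\mathrm{start}}=\pi(t_0)$ being added on line~1 you must also argue that the randomly sampled initial manipulator contacts land in the $U$-component of $\mathcal{B}^{3+N}_\delta(\pi(t_0))$ with nonzero probability, which is how the paper starts its induction.
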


\begin{proof}(Sketch)
At $t_0$, we have $x_0 = x_{\mathrm{start}}$ and the initial manipulator contacts $u_0$ are randomly sampled in $\mathcal{U}$ (line 8, Algorithm \ref{alg:rrt-complete}). The probability of the initial sample to be within $\mathcal{B}^{3+N}_\delta(\pi(t_0))$ is nonzero ($|B^2_\delta|/|\mathcal{U}|$).
By Lemma \ref{lemma:nonzero}, there is nonzero probability of reach from $\mathcal{B}^{d_i+N}_\delta(\pi(t_i))$ to $\mathcal{B}^{d_{i+1}+N}_\delta(\pi(t_{i+1}))$. By proof of induction, the probability of Algorithm \ref{alg:rrt-complete} finding a solution is nonzero.
\end{proof}
\begin{figure}
\vspace{-1.5mm}
\end{figure}
\begin{algorithm}[t]
    \small
    \caption{Contact Mode Based Manipulation Random Tree Planner (probabilistic complete)} \label{alg:rrt-complete}
    \algorithmicrequire $q_{\mathrm{start}}$, $q_{\mathrm{goal}}$ \newline
    \algorithmicensure tree $\mathcal{T}$
    \begin{algorithmic}[1]
        \State $\mathcal{T}\text{.add-node}(q_{\mathrm{start}})$ 
        \While {(\textit{Goal region has not been reached})}
        \State $q_{r} \gets \Call{sample-object-config}{q_{\mathrm{goal}}}$
        \For{$q_n \in \mathcal{T}$}
        \Comment{Iterate every node}
        \State $c^{\mathrm{mnp}}_n \gets \Call{node-property}{q_n}$
        \State $c^{\mathrm{env}}_n \gets \Call{collision-detection}{q_n}$
        \State $\mathcal{M} \gets \Call{contact-mode-enumeration}{q_n}$
        \For {$m \in \mathcal{M}$} 
        \Comment{Iterate every contact mode}
        \State $c^{\mathrm{mnp}}_{\mathrm{new}} \gets \Call{change-manip-contact}{q_{n}, c^{\mathrm{mnp}}_{n}, m}$
        \State $q_{\mathrm{new}} \gets \Call{forward-integrate}{q_{n}, q_{r}, c^{\mathrm{mnp}}_{\mathrm{new}}, m}$
        \If {$q_{\mathrm{new}} \neq q_{\mathrm{near}}$} 
        \State $\Call{assign-finger-contact}{q_{\mathrm{new}},c^{\mathrm{mnp}}_{\mathrm{new}}}$ 
        \State $\mathcal{T}\text{.add-node}(q_{\mathrm{new}})$
        \State $\mathcal{T}\text{.add-edge}{(q_n, q_{\mathrm{new}})}$
        \EndIf
        \EndFor
        \EndFor
        \EndWhile
        \State \Return $\mathcal{T}$
    \end{algorithmic}
\end{algorithm}

\section{Experiments and Results}
\label{sec:experiments}





\subsection{Experiment Validation}

\begin{figure*}
    \centering
    \vspace{0.1cm}
    \includegraphics[width=\textwidth]{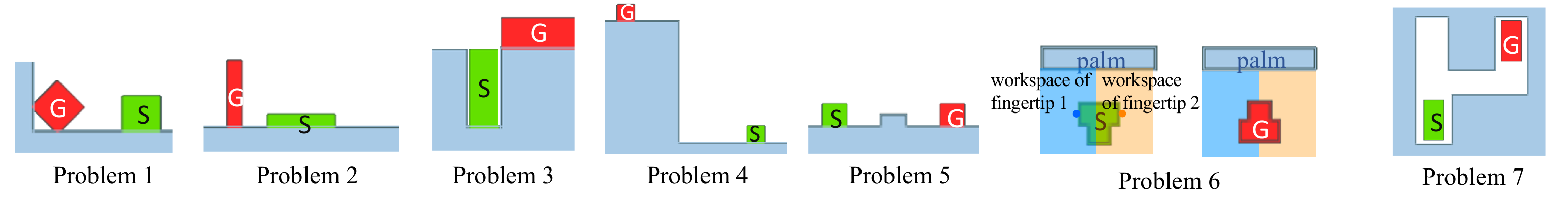}
    \caption{The start configurations (green, ``S'') and the goal configurations (red, ``G'') for each test problems. Problem 1-6 are in the 2D gravity plane. Problem 7 is a planar manipulation problem.}
    \label{fig:goal}
    \vspace{-0.4cm}
\end{figure*}

\begin{figure*}
    \centering
    \includegraphics[width=\textwidth]{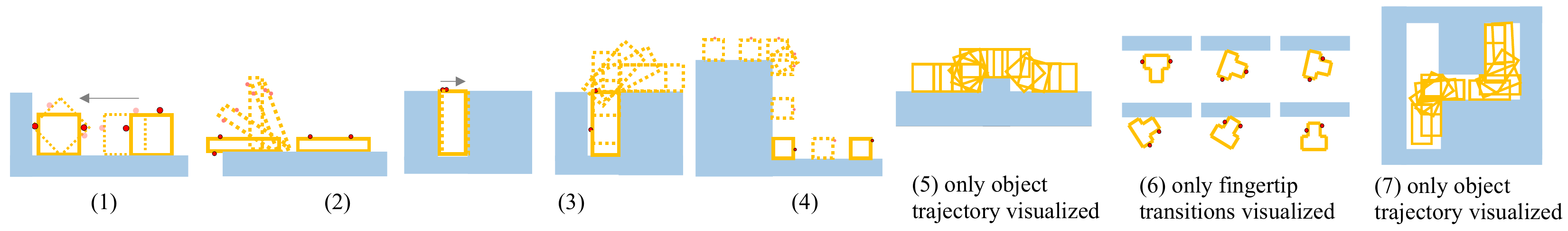}
    \caption{Representative solutions generated by our planner. Object configurations drawn in solid lines for Problem 1-4 indicate that manipulator contact changes are made here, where new manipulator contacts are drawn with red dots. For Problem 5 and 7, as there are many manipulator contact changes, we only visualize the object trajectories. For problem 6, we only visualize where manipulator contact changes happen.}
    \label{fig:solutions}
    
\end{figure*}

We test our planner on several 2D scenarios that need dexterous strategies. 
Figure \ref{fig:goal} gives an overview of all scenarios. All scenarios are designed based on real-world manipulation applications and observations of human manipulation behaviors, to show that our planner has the potentials to address such problems, or automatically plan similar behaviors. Table \ref{tab:experiment} provides the planning statistics for all the problems under 10 random runs performed on a personal computer with Intel Core i7-6560U 2.2GHz CPU.
No parameter has been specifically tuned for any problem above. 
Our results show that this planner is suitable for a variety of 2D manipulation tasks. We visualize one representative solution of each problem in Figure \ref{fig:solutions}. Please check our supplementary video for all solutions for the 10 random runs of each problem.




\begin{table*}[]
    \centering
    \begin{tabular}{c|c|cccccccc}
    \toprule

    \multicolumn{2}{c|}{Problem} & 1 & 2 & 3 & 4 & 5 & 6 & 7  \\  \midrule

    \multicolumn{2}{c|}{Success (Total nodes $<$200)} & 10/10 & 10/10 & 10/10 & 9/10 & 8/10 & 6/10 (nodes$<$1000) & 10/10  \\ \midrule
    \multirow{3}{*}{Time (second)} & Min & 0.11 & 0.64 & 0.45 & 3.18 & 2.05 & 1.98 & 0.71\\
         &  Median & 0.26 & 1.97 & 1.46 & 4.25 & 7.56 & 16.97 & 3.62\\
         &  Max & 0.61 & 12.84 & 3.62 & 8.74 & 15.78 & 43.12 & 5.54\\\midrule
    \multirow{2}{*}{Nodes (median)} & in tree  & 7& 14.5 & 16.5 & 44 & 107 & 480 & 68.5 \\
    & in path & 4 & 5 & 8 & 11 & 17 & 25.5 & 18.5 \\ \midrule
    \multicolumn{2}{c|}{Contact Modes in Path (median)} & 3  & 3 & 5 & 5 & 9 & 4.5 (fingertip states) & 9.5 \\ \bottomrule
    \end{tabular}
    \caption{Planning time, tree sizes and solution sizes of our planner for Problem 1-7. Contact Modes in Path: the number of different contact states and contact modes in the solution path; for Problem 6 we show the number of different finger contact states instead. }
    \label{tab:experiment}
    \vspace{-0.5cm}
\end{table*}


\noindent \textbf{Problem 1: Move and Pivot a Block.} A two-point manipulator needs to move and pivot a heavy block that it cannot directly pick up.
This planner consistently generates the strategies of first pushing and then pivoting the block. As summarized in Table \ref{tab:experiment}, it can get all solutions at around 0.3 seconds with a small number of total nodes explored. 

\noindent \textbf{Problem 2: Pick up a Blade.} The manipulator cannot contact the short edges of a blade-like object. A common strategy used by humans is to slide the object out of the edge of the table to expose its bottom surface for grasping. Our planner is able to consistently generate similar strategies shown in Figure \ref{fig:solutions}(2) at about 2 seconds (Table \ref{tab:experiment})

\noindent \textbf{Problem 3: Unpacking.} The hardest part of picking up well-packed objects is the first object. The small gaps prevent two fingers from picking up the first object directly. 
As shown in Figure \ref{fig:solutions}(3), this method plans to push the object to create a wider gap that lets the fingers slide up the object from one side. Similar strategies have been observed in decluttering actions in human grasping \cite{nakamura2017complexities}.   

\noindent \textbf{Problem 4: Move a Block up to the Cliff.} A single-point manipulator needs to take a block up to a cliff, as shown in Figure \ref{fig:goal}. Only given the start and goal object configuration, our planner successfully generates strategies of push on the floor, push along the cliff, and pivot under gravity. 

\noindent \textbf{Problem 5: Manipulator's Obstacle Course.} This problem shows our planner capability for planning over longer horizons. Due to force limits, the manipulator needs to get the object through a simple 'obstacle course' without picking it up. 
This planner can plan different strategies with many contact switches to get through the obstacles, by using the contacts with the environment.


\noindent \textbf{Problem 6: In-hand Manipulation for Non-Convex Object} This is a simplified in-hand manipulation scenario to show that our method works for non-convex polytopic-shape objects and can generate plans that are not even intuitive to human. As shown in Figure \ref{fig:goal}, the blue rectangle is the palm, and the workspace of each fingertip is a half side of the hand. The goal is to flip the T-shape object 180 degrees with respect to the palm. The planned strategy is to use the palm as extra support to change finger locations. Due to the sampling-based planning nature, the redundant motion problem is obvious here. Trajectory optimization can be used to smoothen the motions as shown in the video. 

\noindent \textbf{Problem 7: Narrow Passage for Planar Pushing} Our method is also general enough for planar manipulation. The problem is to push the object through the narrow hallway with one contact. As shown in the video and Figure \ref{fig:solutions}, our planner consistently generates the strategies of utilizing the environment contacts to move and reorient the object.


\subsection{Robot Experiments}
\label{sec:robo_experiments}
We validate our plans of Problem 4 and 5 on an ABB IRB-120 robot with a point-finger. The plans are generated by Algorithm~\ref{alg:rrt} with a stability margin filter as describe in Section~\ref{sec:main-alg}. The plans are executed by a hybrid force velocity controller\cite{hou2020manipulation}. The experiments are presented in Figure \ref{fig:robot} and in the supplementary video.
\begin{figure}
    \centering
    \includegraphics[height=0.3\columnwidth]{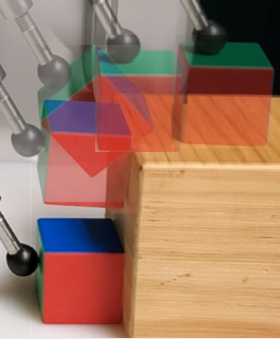}
    \includegraphics[height=0.3\columnwidth]{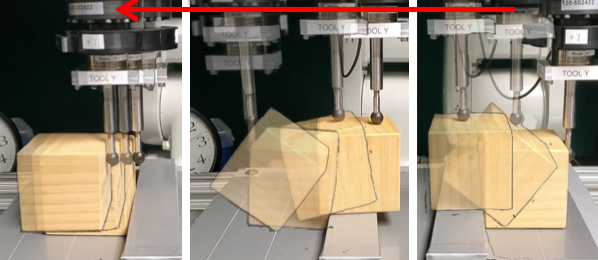}
    \caption{Two trajectories recorded during robot experiments. Left: push a block up to the cliff. Right: take the block through the obstacle using one contact. Three manipulator contact changes are planned.}
    \vspace{-0.5cm}
    \label{fig:robot}
\end{figure}


\section{Conclusion and Discussion}
\label{sec:discussion}

In this paper, we propose a contact mode guided sampling-based 2D manipulation planning framework which generates quasistatic motion trajectory and contact transitions. This method has the potentials to be used in hand manipulation where contact points are fingertips, and in multi-robot manipulation where contact points are mobile robots. 

One limitation of this method is that every motion needs to be quasistatic, which makes our algorithm fails to plan simple dynamic motions like releasing the fingers to drop the object. While quasi-dynamic or dynamic formulations may double the dimensions of the search space, one potential solution is to perform short periods of dynamic simulation between two quasi-static states under some special circumstances. This planner can also be augmented by local planners for fingertip placement and fingertip sliding. Moreover, the contact mode guidance can also be applied on other possibly more efficient constrained sampling-based planning frameworks \cite{kingston2019exploring}.



\bibliographystyle{IEEEtran.bst}
\bibliography{IEEEabrv, references}

\begin{thebibliography}{10}
\providecommand{\url}[1]{#1}
\csname url@samestyle\endcsname
\providecommand{\newblock}{\relax}
\providecommand{\bibinfo}[2]{#2}
\providecommand{\BIBentrySTDinterwordspacing}{\spaceskip=0pt\relax}
\providecommand{\BIBentryALTinterwordstretchfactor}{4}
\providecommand{\BIBentryALTinterwordspacing}{\spaceskip=\fontdimen2\font plus
\BIBentryALTinterwordstretchfactor\fontdimen3\font minus
  \fontdimen4\font\relax}
\providecommand{\BIBforeignlanguage}[2]{{%
\expandafter\ifx\csname l@#1\endcsname\relax
\typeout{** WARNING: IEEEtran.bst: No hyphenation pattern has been}%
\typeout{** loaded for the language `#1'. Using the pattern for}%
\typeout{** the default language instead.}%
\else
\language=\csname l@#1\endcsname
\fi
#2}}
\providecommand{\BIBdecl}{\relax}
\BIBdecl

\bibitem{byrne2001manual}
R.~W. Byrne, J.~M. Byrne \emph{et~al.}, ``Manual dexterity in the gorilla:
  bimanual and digit role differentiation in a natural task,'' \emph{Animal
  Cognition}, vol.~4, no. 3-4, pp. 347--361, 2001.

\bibitem{nakamura2017complexities}
Y.~C. Nakamura, D.~M. Troniak, A.~Rodriguez, M.~T. Mason, and N.~S. Pollard,
  ``The complexities of grasping in the wild,'' in \emph{2017 IEEE-RAS 17th
  International Conference on Humanoid Robotics (Humanoids)}.\hskip 1em plus
  0.5em minus 0.4em\relax IEEE, 2017, pp. 233--240.

\bibitem{mason1986mechanics}
M.~T. Mason, ``Mechanics and planning of manipulator pushing operations,''
  \emph{The International Journal of Robotics Research}, vol.~5, no.~3, pp.
  53--71, 1986.

\bibitem{lynch1996stable}
K.~M. Lynch and M.~T. Mason, ``Stable pushing: Mechanics, controllability, and
  planning,'' \emph{The international journal of robotics research}, vol.~15,
  no.~6, pp. 533--556, 1996.

\bibitem{aiyama1993pivot}
Y.~{Aiyama}, M.~{Inaba}, and H.~{Inoue}, ``Pivoting: A new method of graspless
  manipulation of object by robot fingers,'' in \emph{Proceedings of 1993
  IEEE/RSJ International Conference on Intelligent Robots and Systems (IROS
  '93)}, vol.~1, 1993, pp. 136--143 vol.1.

\bibitem{holladay2015general}
A.~Holladay, R.~Paolini, and M.~T. Mason, ``A general framework for open-loop
  pivoting,'' in \emph{2015 IEEE International Conference on Robotics and
  Automation (ICRA)}.\hskip 1em plus 0.5em minus 0.4em\relax IEEE, 2015, pp.
  3675--3681.

\bibitem{hou2018fast}
Y.~Hou, Z.~Jia, and M.~T. Mason, ``Fast planning for 3d any-pose-reorienting
  using pivoting,'' in \emph{2018 IEEE International Conference on Robotics and
  Automation (ICRA)}.\hskip 1em plus 0.5em minus 0.4em\relax IEEE, 2018, pp.
  1631--1638.

\bibitem{maeda2004motion}
Y.~Maeda, T.~Nakamura, and T.~Arai, ``Motion planning of robot fingertips for
  graspless manipulation,'' in \emph{IEEE International Conference on Robotics
  and Automation, 2004. Proceedings. ICRA'04. 2004}, vol.~3.\hskip 1em plus
  0.5em minus 0.4em\relax IEEE, 2004, pp. 2951--2956.

\bibitem{salisbury1988whole}
K.~Salisbury, ``Whole arm manipulation,'' in \emph{Proceedings of the 4th
  international symposium on Robotics Research}.\hskip 1em plus 0.5em minus
  0.4em\relax MIT Press, 1988, pp. 183--189.

\bibitem{ChavanDafle2014extrinsic}
N.~Chavan-Dafle, A.~Rodriguez, R.~Paolini, B.~Tang, S.~Srinivasa, M.~Erdmann,
  M.~T. Mason, I.~Lundberg, H.~Staab, and T.~Fuhlbrigge, ``Extrinsic dexterity:
  In-hand manipulation with external forces,'' in \emph{Proceedings of IEEE
  International Conference on Robotics and Automation (ICRA)}, May 2014.

\bibitem{eppner2015exploitation}
C.~Eppner, R.~Deimel, J.~Alvarez-Ruiz, M.~Maertens, and O.~Brock,
  ``Exploitation of environmental constraints in human and robotic grasping,''
  \emph{The International Journal of Robotics Research}, vol.~34, no.~7, pp.
  1021--1038, 2015.

\bibitem{eppner2015planning}
C.~Eppner and O.~Brock, ``Planning grasp strategies that exploit environmental
  constraints,'' in \emph{2015 IEEE International Conference on Robotics and
  Automation (ICRA)}.\hskip 1em plus 0.5em minus 0.4em\relax IEEE, 2015, pp.
  4947--4952.

\bibitem{hou2020manipulation}
Y.~Hou, Z.~Jia, and M.~T. Mason, ``Manipulation with shared grasping,'' 2020.

\bibitem{toussaint2018differentiable}
M.~Toussaint, K.~R. Allen, K.~A. Smith, and J.~B. Tenenbaum, ``Differentiable
  physics and stable modes for tool-use and manipulation planning.'' in
  \emph{Robotics: Science and Systems}, vol.~2, 2018.

\bibitem{lozano2014sequential}
T.~{Lozano-Pérez} and L.~P. {Kaelbling}, ``A constraint-based method for
  solving sequential manipulation planning problems,'' in \emph{2014 IEEE/RSJ
  International Conference on Intelligent Robots and Systems}, 2014, pp.
  3684--3691.

\bibitem{barry2013hierarchical}
J.~Barry, L.~P. Kaelbling, and T.~Lozano-P{\'e}rez, ``A hierarchical approach
  to manipulation with diverse actions,'' in \emph{2013 IEEE International
  Conference on Robotics and Automation}.\hskip 1em plus 0.5em minus
  0.4em\relax IEEE, 2013, pp. 1799--1806.

\bibitem{woodruff2017planning}
J.~Z. Woodruff and K.~M. Lynch, ``Planning and control for dynamic,
  nonprehensile, and hybrid manipulation tasks,'' in \emph{2017 IEEE
  International Conference on Robotics and Automation (ICRA)}.\hskip 1em plus
  0.5em minus 0.4em\relax IEEE, 2017, pp. 4066--4073.

\bibitem{berenson2011constrained}
D.~Berenson, ``Constrained manipulation planning,'' 2011.

\bibitem{lee2015hierarchical}
G.~Lee, T.~Lozano-P{\'e}rez, and L.~P. Kaelbling, ``Hierarchical planning for
  multi-contact non-prehensile manipulation,'' in \emph{2015 IEEE/RSJ
  International Conference on Intelligent Robots and Systems (IROS)}.\hskip 1em
  plus 0.5em minus 0.4em\relax IEEE, 2015, pp. 264--271.

\bibitem{simeon2004manipulation}
T.~Sim{\'e}on, J.-P. Laumond, J.~Cort{\'e}s, and A.~Sahbani, ``Manipulation
  planning with probabilistic roadmaps,'' \emph{The International Journal of
  Robotics Research}, vol.~23, no. 7-8, pp. 729--746, 2004.

\bibitem{xiao2001automatic}
J.~Xiao and X.~Ji, ``Automatic generation of high-level contact state space,''
  \emph{The International Journal of Robotics Research}, vol.~20, no.~7, pp.
  584--606, 2001.

\bibitem{trinkle1991dexterous}
J.~C. {Trinkle} and J.~J. {Hunter}, ``A framework for planning dexterous
  manipulation,'' in \emph{Proceedings. 1991 IEEE International Conference on
  Robotics and Automation}, 1991, pp. 1245--1251 vol.2.

\bibitem{ji2001planning}
X.~Ji and J.~Xiao, ``Planning motions compliant to complex contact states,''
  \emph{The International Journal of Robotics Research}, vol.~20, no.~6, pp.
  446--465, 2001.

\bibitem{tang2008automatic}
P.~Tang and J.~Xiao, ``Automatic generation of high-level contact state space
  between 3d curved objects,'' \emph{The International Journal of Robotics
  Research}, vol.~27, no.~7, pp. 832--854, 2008.

\bibitem{bohringer1997distributed}
K.~B{\"o}hringer, R.~Brown, B.~Donald, J.~Jennings, and D.~Rus, ``Distributed
  robotic manipulation: Experiments in minimalism,'' in \emph{Experimental
  robotics IV}.\hskip 1em plus 0.5em minus 0.4em\relax Springer, 1997, pp.
  11--25.

\bibitem{mataric1995cooperative}
M.~J. Mataric, M.~Nilsson, and K.~T. Simsarin, ``Cooperative multi-robot
  box-pushing,'' in \emph{Proceedings 1995 IEEE/RSJ International Conference on
  Intelligent Robots and Systems. Human Robot Interaction and Cooperative
  Robots}, vol.~3.\hskip 1em plus 0.5em minus 0.4em\relax IEEE, 1995, pp.
  556--561.

\bibitem{rus1995moving}
D.~Rus, B.~Donald, and J.~Jennings, ``Moving furniture with teams of autonomous
  robots,'' in \emph{Proceedings 1995 IEEE/RSJ International Conference on
  Intelligent Robots and Systems. Human Robot Interaction and Cooperative
  Robots}, vol.~1.\hskip 1em plus 0.5em minus 0.4em\relax IEEE, 1995, pp.
  235--242.

\bibitem{fink2008multi}
J.~Fink, M.~A. Hsieh, and V.~Kumar, ``Multi-robot manipulation via caging in
  environments with obstacles,'' in \emph{2008 IEEE International Conference on
  Robotics and Automation}.\hskip 1em plus 0.5em minus 0.4em\relax IEEE, 2008,
  pp. 1471--1476.

\bibitem{kingston2019exploring}
Z.~Kingston, M.~Moll, and L.~E. Kavraki, ``Exploring implicit spaces for
  constrained sampling-based planning,'' \emph{The International Journal of
  Robotics Research}, vol.~38, no. 10-11, pp. 1151--1178, 2019.

\bibitem{mordatch2012contact}
I.~Mordatch, Z.~Popovi{\'c}, and E.~Todorov, ``Contact-invariant optimization
  for hand manipulation,'' in \emph{Proceedings of the ACM
  SIGGRAPH/Eurographics symposium on computer animation}.\hskip 1em plus 0.5em
  minus 0.4em\relax Eurographics Association, 2012, pp. 137--144.

\bibitem{mordatch2012discovery}
I.~Mordatch, E.~Todorov, and Z.~Popovi{\'c}, ``Discovery of complex behaviors
  through contact-invariant optimization,'' \emph{ACM Transactions on Graphics
  (TOG)}, vol.~31, no.~4, pp. 1--8, 2012.

\bibitem{posa2014direct}
M.~Posa, C.~Cantu, and R.~Tedrake, ``A direct method for trajectory
  optimization of rigid bodies through contact,'' \emph{The International
  Journal of Robotics Research}, vol.~33, no.~1, pp. 69--81, 2014.

\bibitem{Sleiman2019CIO}
J.~{Sleiman}, J.~{Carius}, R.~{Grandia}, M.~{Wermelinger}, and M.~{Hutter},
  ``Contact-implicit trajectory optimization for dynamic object manipulation,''
  in \emph{2019 IEEE/RSJ International Conference on Intelligent Robots and
  Systems (IROS)}, 2019, pp. 6814--6821.

\bibitem{doshi2020icra}
F.~H. N.~Doshi and A.~Rodriguez, ``Hybrid differential dynamic programming for
  planar manipulation primitives,'' in \emph{ICRA}, 2020.

\bibitem{aceituno-cabezas2020rss}
B.~Aceituno-Cabezas and A.~Rodriguez, ``A global quasi-dynamic model for
  contact-trajectory optimization,'' in \emph{RSS}, 2020.

\bibitem{manchester2020variational}
Z.~Manchester and S.~Kuindersma, ``Variational contact-implicit trajectory
  optimization,'' in \emph{Robotics Research}.\hskip 1em plus 0.5em minus
  0.4em\relax Springer, 2020, pp. 985--1000.

\bibitem{Mason}
M.~T. Mason, \emph{Mechanics of Robotic Manipulation}.\hskip 1em plus 0.5em
  minus 0.4em\relax Cambridge, MA, USA: MIT Press, 2001.

\bibitem{murray1994mathematical}
R.~M. Murray, Z.~Li, S.~S. Sastry, and S.~S. Sastry, \emph{A mathematical
  introduction to robotic manipulation}.\hskip 1em plus 0.5em minus 0.4em\relax
  CRC press, 1994.

\end{thebibliography}

\end{document}